\newcommand{\bybcomment}[1]{}
\newcommand{\stcomment}[1]{}
\newcommand{\Omit}[1]{}
\newif\ifsmalltable\smalltablefalse
\newcommand{\pog}{TILG}
\newcommand{\capsfullpog}{Typed Instance Learning Graph}
\newcommand{\fullpog}{typed instance learning graph}
\newcommand{\iso}{\tau}
\newcommand{\ncol}{c}
\newcommand{\ecol}{l}
\newtheorem{theorem}{Theorem}
\newtheorem{definition}{Definition}
\newtheorem{lemma}{Lemma}
\title{Learning Efficiency Meets Symmetry Breaking}
\author {
    % Authors
    Yingbin Bai\textsuperscript{\rm 1},
    Sylvie Thi\'ebaux\textsuperscript{\rm 1, 2},
    Felipe Trevizan\textsuperscript{\rm 1}
}
\begin{document}

\maketitle

\begin{abstract}
Learning-based planners leveraging Graph Neural Networks can learn search guidance applicable to large search spaces, yet their potential to address symmetries remains largely unexplored.
In this paper, we introduce a graph representation of planning problems allying learning efficiency with the ability to detect symmetries, along with two pruning methods, action pruning and state pruning, designed to manage symmetries during search. The integration of these techniques into Fast Downward achieves a first-time success over LAMA on the latest IPC learning track dataset. Code is released at: \url{https://github.com/bybeye/Distincter}.
\end{abstract}

\section{Introduction}
Over the past two decades, heuristic search has achieved significant success across a variety of planning problems, and has become the standard approach in the field \citep{richter:westphal:10, bercher:etal:20, correa:etal:22, geisser:etal:22, klossner:etal:23}. Nevertheless, even in classical planning, scalability remains a significant challenge for these methods. This has led a growing number of researchers to turn to learning-based methods, particularly using Graph Neural Networks (GNNs) \citep{toyer:etal:20, shen:etal:20, stahlberg:etal:22, chen:etal:24:icaps, horcik:sir:24, hao:etal:24, drexler:etal:24:kr}. Unlike traditional model-based methods, which are reliant solely on analysing planning domain and problem definitions, GNNs are capable of learning patterns and strategies from existing plans to enhance search efficiency and adaptability.

However, learning efficiency alone is insufficient to address the challenges inherent in large-scale planning, which often involves a substantial number of symmetrical states~\cite{wehrle:etal:15, sievers:etal:19:icaps}.
Although these states do not affect plan quality, they consume significant computational resources and can considerably slow down the search process.
In this paper, we use NNs and GNNs with permutation invariant activation functions to learn a permutation invariant function allowing them to produce consistent outputs for symmetrical inputs.
Despite this advantage, the full potential of this feature has not yet been effectively harnessed to detect and break symmetries during the search process.

In this paper, we remedy this by introducing  a graph representation
designed to achieve two key objectives: learning efficiency and symmetry reduction. Leveraging the strengths of this representation, we propose two pruning methodologies: action pruning and state pruning. Action pruning infers symmetries by analyzing object involvement in action parameters, without generating child states nor computing their heuristic value. Additionally, since GNNs can retain invariant outputs for symmetrical inputs, state pruning exploits this property to efficiently identify symmetries between states.

To evaluate the proposed techniques, we implemented them on top of Fast Downward \cite{helmert:06} in a planner called Distincter and carried out experiments on the 2023 International Planning Competition Learning Track. The overall coverage of Distincter surpasses that of the traditional SOTA method, LAMA \cite{richter:westphal:10}, for the first time in the recent literature on learning planning heuristics, marking a significant milestone for learning-based methods.

In terms of related work, recent independent work by \cite{drexler:etal:24:kr} removes symmetries in the training set in offline mode, thereby improving training effectiveness. In contrast, our approach focuses on removing symmetries during the search process, so as to enhance search efficiency and scale to large planning problems.

\section{Background and notation}
A lifted planning problem is defined as a tuple $\Pi = \langle \mathcal{O}, \mathcal{T}, \mathcal{P}, \mathcal{A}, \mathcal{I}, \mathcal{G}\rangle$, where $\mathcal{O}$ denotes a set of objects,
% $\tau$
$\mathcal{T}$ is a set of object types, $\mathcal{P}$ consists of first-order predicates, $\mathcal{A}$ comprises action schemas, $\mathcal{I}$ specifies the current (or initial) state, and $\mathcal{G}$ delineates the goal.

A predicate $p \in \mathcal{P}$ has parameters $x_{p_1}, \ldots, x_{p_n}$ for $p_n \in \mathbb{N}$, where each parameter requires a specific type of object. A predicate can be instantiated by assigning each $x_i$ to an object from $\mathcal{O}$, resulting in a proposition $\rho$.
A state is an assignment of truth value to the propositions.

An action schema $a = \langle X_a, \textit{pre}(a), \textit{add}(a), \textit{del}(a) \rangle $ is defined as a tuple comprising a list of typed parameters $X_a = (x_{a_1}, \ldots x_{a_n})$, along with sets of preconditions, add effects, and delete effects, all of which are predicates in $\mathcal{P}$ with parameters from $X_a$. When all parameters of an action schema are instantiated with objects of the required types, the action is referred to as a ground action. A ground action $a$ is applicable in a state $s$ if $\textit{pre}(a) \subseteq s$. When $a$ is applied to $s$, the resulting state $s'$ is given by $(s \setminus \textit{del}(a)) \cup \textit{add}(a)$. In this context, the state $s$ is referred to as the parent state, and $s'$ is known as the child state. Since the set of applicable actions for a parent state is typically not a singleton, expanding a parent state usually generates a set of child states.

A sequence of actions $a_1, \ldots, a_n$ is applicable in a state $s$ if there exists a sequence of states $s_0, \ldots, s_n$ such that $s_0 = s$, and for each $i \in \{1, \ldots,n\}$, the state $s_i$ is the result of applying $a_i$ in $s_{i-1}$. The aim is to find a plan for a given planning problem $\Pi$, which is a sequence of ground actions that is applicable in the initial state $\mathcal{I}$ and results in a state $s_n$ such that $\mathcal{G} \subseteq s_n$.

A colored (or labelled) graph is a tuple $G = \langle V, E, \ncol, \ecol \rangle$ where $V$ is the set of vertices, $E$ is the set of edges, and $\ncol$ (resp.\ $\ecol$) maps vertices (resp.\ edges) to their color.
Two graphs $G = \langle V, E,\ncol, \ecol \rangle$ and $G' = \langle V', E',\ncol', \ecol' \rangle$ are isomorphic, denoted by $G \cong G'$, if there exists a bijection $\iso: V \rightarrow V'$ such that $(u, v) \in E$ iff $(\iso(u), \iso(v)) \in E'$, $\ncol'(\iso(v))=\ncol(v)$ for all $v\in V$, and $\ecol'((\iso(u),\iso(v))) = \ecol((u,v))$ for all $(u,v)\in E$.

An automorphism of $G$ is defined as an equivalence relation $\sigma$ representing an isomorphism between $G$ and itself.
The set of all automorphisms of $G$ forms a group under the operation of composition, known as the automorphism group $\text{Aut}(G)$ of the graph.
The orbit of a vertex $v$ in a graph consists of all vertices that can be transformed into $v$ by any automorphism in $\text{Aut}(G)$.
This implies that any two vertices within the same orbit are structurally equivalent in the graph, maintaining the same connections and roles relative to other vertices and edges.

\section{Distincter}

\subsection{\capsfullpog\ (\pog)}

Our graph representation extends the Instance Learning Graph (ILG) \cite{chen:etal:24:icaps}, maintaining similar structures but offering additional information for learning and symmetry detection.
The graph's
%nodes
vertices represent objects and propositions in the initial (current) state and the goal, and edges exist between propositions and the objects in their parameter list.\footnote{In the following, we will use the word \textit{symmetric} to refer to states represented by isomorphic TILG and to objects or propositions that are related via $\iso$ (or $\sigma$ depending on the context).}
% Node
Vertex features capture the object types, the predicates instanciated by the propositions, and whether goal propositions have been achieved. Edge features capture the index of objects in proposition parameter lists. Formally:

\begin{definition}
  Let $\Pi = \langle \mathcal{O}, \mathcal{T}, \mathcal{P}, \mathcal{A}, \mathcal{I}, \mathcal{G} \rangle$ represent a lifted planning problem.
  The \fullpog\ (\pog) for $\Pi$ is the undirected graph $G_{\Pi} = \langle V, E, f, l \rangle$, such that:
\begin{itemize}
    \item $V = \mathcal{O} \cup \mathcal{I} \cup \mathcal{G}$
    \item $E = \{ (o, p(o_1,..., o_n)) | o \in \mathcal{O}, \exists i\: o = o_i, p(o_1,..., o_n) \in \mathcal{I} \cup \mathcal{G}\}$
    \item $\ncol\!: \!V \!\rightarrow \!\{(status, class) \! \mid \!status \!\in \!\{0, 1, 2, 3\}, class \in \!\mathcal{T} \!\cup\! \mathcal{P} \}$, maps each vertex to a tuple where: 
        \begin{itemize}
        \item $status$ indicates the
          goal status of propositions: $0$ for non-goal propositions in ${\cal I}\setminus {\cal G}$, $1$ for unachieved goal propositions in ${\cal G}\setminus {\cal I}$, and $2$ for achieved goal propositions in ${\cal I} \cap {\cal G}$. $status=3$ for object vertices.
        \item $class$ refers to the object type for object vertices, and for proposition vertices, it denotes the predicate of which the proposition is an instance.
        \end{itemize}
    \item $\ecol: E \rightarrow \mathbb{N}$, where for each edge $e \in E$, $\ecol(e)$ indicates the index of the object in the proposition parameters.
\end{itemize}
\end{definition}

In ILG, the object type information is absent, whereas \pog\ embeds it within each object
%node.
vertex. This may seem minor, but it adds valuable information to each object
%node,
vertex significantly enriching the information available. Moreover, Fast Downward omits static propositions during search, which causes them to be missing in existing ILG implementations as well. While this omission does not affect traditional heuristic methods, it significantly impacts learning methods, which estimate heuristics based on the graph. Without static propositions, crucial information is lost, leading to blind guesses for some actions. For instance, ``waiting'' propositions in the childsnack domain are static, and without this information, planners are unable to determine which table the tray should be moved to. Therefore, \pog\ includes static propositions.

In the following, all elements of the problem $\Pi$ are fixed, except for the current state $s$. \textbf{We shall therefore identify $\Pi$ with $s$ and will refer to the TILG $G_\pi$ as $G_s$.}

\subsection{Action Pruning}
With Greedy Best-First Search (GBFS), planners select the state with the smallest heuristic value to expand, which requires computing the heuristic value of all child states. When child states contain a large number of symmetries, this can result in significant time wasted on redundant calculations.

Shallow pruning was designed to address this challenge \cite{pochter:etal:11}. However, the problem description graph (PDG) used in shallow pruning requires instantiating all predicates and action schemas within the graph, resulting in significant computational overhead for each state. To improve efficiency, we introduce Action Pruning, which replaces the PDG with \pog. A key innovation of action pruning is its ability to \textbf{infer symmetrical child states from the parent state}, eliminating the need for action preconditions and effects in the graph. By leveraging the much more compact \pog\ representation and its inference capability, action pruning enables faster automorphism calculations.

\begin{definition}(Object Tuples Equivalence)
  Let $\langle A_1,\dots,A_n \rangle$ and $\langle B_1,\dots,B_n \rangle$ be two tuples of objects s.t. $A_i$ and $B_i$ in $\mathcal{O}$ with corresponding vertices $u_i$ and $v_i$ in the \pog\ $G_s$. We say that $\langle A_1,\dots,A_n \rangle$ is equivalent to $\langle B_1,\dots,B_n \rangle$ in $s$, denoted as $\langle A_1,\dots,A_n \rangle \simeq \langle B_1,\dots,B_n \rangle$, iff there exist an automorphism of $G_s$ represented by the bijective function $\sigma$ s.t. $\sigma(u_i) = v_i$ for all $i \in \{1,\dots,n\}$.
\end{definition}

\begin{theorem}\label{main}
Let $A_i$ and $B_i$ be objects in ${\cal O}$ for $i \in \{1, \dots, n\}$ with $A_i \neq A_j$ and $B_i \neq B_j$ for all $i\neq j$. Let an action schema $\alpha \in \mathcal{A}$, and consider two ground actions, $a = \alpha(A_1, A_2, \dots, A_n)$ and $b = \alpha(B_1, B_2, \dots, B_n)$, applicable in a state $s$, resulting in successor states $s_a$ and $s_b$ respectively. If $\langle A_1,\dots,A_n \rangle \simeq \langle B_1,\dots,B_n \rangle$ in $s$ then the TILGs $G_{s_a}$ and $G_{s_b}$ are isomorphic, i.e., $G_{s_a} \cong G_{s_b}$.

\end{theorem}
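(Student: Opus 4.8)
The plan is to turn the given automorphism $\sigma$ of $G_s$ (the one witnessing $\langle A_1,\dots,A_n\rangle \simeq \langle B_1,\dots,B_n\rangle$, so that $\sigma(A_i)=B_i$ under the identification of objects with their vertices) into an explicit isomorphism $\hat\sigma\colon G_{s_a}\to G_{s_b}$. First I would observe that $\sigma$ must send object vertices to object vertices and proposition vertices to proposition vertices, since the two kinds carry different $status$ colours ($status=3$ versus $status\in\{0,1,2\}$); hence $\sigma$ restricts to a permutation $\pi$ of $\mathcal O$ with $\pi(A_i)=B_i$. I would then define $\hat\sigma$ to act as $\pi$ on objects and as argument renaming on every proposition, $\hat\sigma(p(o_1,\dots,o_m))=p(\pi(o_1),\dots,\pi(o_m))$, and take this as the candidate isomorphism. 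Note that $\pi$ being a permutation of $\mathcal O$ makes $\hat\sigma$ a global bijection on the set of all propositions.

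The key structural lemma is that $\sigma$ already coincides with this renaming on all of $G_s$. For a proposition vertex $\rho=p(o_1,\dots,o_m)$, colour preservation forces $\sigma(\rho)$ to instantiate the same predicate $p$, while preservation of the incident edges together with their labels $\ecol$ (which record argument position) forces $\sigma(\rho)$ to be adjacent to $\pi(o_i)$ precisely at position $i$; thus $\sigma(\rho)=p(\pi(o_1),\dots,\pi(o_m))=\hat\sigma(\rho)$. This argument is robust to repeated arguments, since the labels pin down each occurrence. A consequence I would record is that, because $\sigma$ preserves $status$, it fixes setwise both $s$ (the propositions of colour $status\in\{0,2\}$) and $\mathcal G$ (those of colour $status\in\{1,2\}$); hence $\hat\sigma(s)=s$ and $\hat\sigma(\mathcal G)=\mathcal G$.

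Next I would transport the action application through $\hat\sigma$. Since $a$ and $b$ instantiate the same schema $\alpha$ via $x_{a_i}\mapsto A_i$ and $x_{a_i}\mapsto B_i=\pi(A_i)$, and every atom of $\textit{add}(\alpha)$ and $\textit{del}(\alpha)$ uses only these parameters, applying $\hat\sigma$ to the ground effects of $a$ yields exactly the ground effects of $b$: $\hat\sigma(\textit{add}(a))=\textit{add}(b)$ and $\hat\sigma(\textit{del}(a))=\textit{del}(b)$. Because $\hat\sigma$ is a global bijection on propositions it commutes with $\setminus$ and $\cup$, so from $s_a=(s\setminus\textit{del}(a))\cup\textit{add}(a)$ I obtain $\hat\sigma(s_a)=(\hat\sigma(s)\setminus\hat\sigma(\textit{del}(a)))\cup\hat\sigma(\textit{add}(a))=(s\setminus\textit{del}(b))\cup\textit{add}(b)=s_b$.

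Finally I would check that $\hat\sigma$ is an isomorphism $G_{s_a}\cong G_{s_b}$. It bijects $V(G_{s_a})=\mathcal O\cup s_a\cup\mathcal G$ onto $V(G_{s_b})=\mathcal O\cup s_b\cup\mathcal G$, using $\hat\sigma(\mathcal O)=\mathcal O$, $\hat\sigma(s_a)=s_b$ and $\hat\sigma(\mathcal G)=\mathcal G$. Edges and their index labels $\ecol$ are preserved by construction, as $o$ at position $i$ of a proposition maps to $\pi(o)$ at the same position; object types are preserved because $\pi$ is the restriction of the automorphism $\sigma$; the $class$ of a proposition is its predicate, which $\hat\sigma$ leaves unchanged; and the $status$ is preserved because membership in $s_a$ and in $\mathcal G$ is carried to membership in $s_b$ and $\mathcal G$ by the identities just established. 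I expect the main obstacle to be the rigidity step of the second paragraph together with the set-level identity $\hat\sigma(s_a)=s_b$: the conceptual heart is that a symmetry of the parent must commute with symmetric action application, and the fiddly points are coherently extending $\pi$ to the new atoms introduced by $\textit{add}(a)$ (which are absent from $G_s$) and the bookkeeping of $status$ across its three membership cases.
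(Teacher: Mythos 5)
Your proof is correct and rests on the same core idea as the paper's: extend the automorphism $\sigma$ of $G_s$ to the successor TILGs by following the schema grounding on the newly added atoms. The packaging, however, is genuinely different and arguably cleaner. The paper defines the candidate isomorphism $\iso$ piecewise --- $\sigma$ on the vertices surviving from $G_s$, and the $a^+\!\leftrightarrow b^+$ correspondence on the new add-effect vertices --- and then checks bijectivity and colour compatibility by a case analysis (old versus new vertices, goal versus non-goal, achieved versus unachieved), supported by a lemma that only covers propositions whose arguments all lie among the $A_i$. You instead prove a stronger rigidity lemma --- every TILG automorphism acts on \emph{every} proposition vertex as argument renaming by its restriction $\pi$ to objects --- and define $\hat\sigma$ as that renaming globally on the Herbrand base. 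This buys you three things: the identity $\hat\sigma(s_a)=s_b$ falls out of set algebra because $\hat\sigma$ commutes with $\setminus$ and $\cup$; the $status$ colours are preserved automatically since they are determined by membership in $s_a$ (resp.\ $s_b$) and $\mathcal G$, both of which $\hat\sigma$ carries over, so the paper's case split on goal propositions disappears; and the bijection between the surviving propositions of $s$ under $a$ and under $b$ (which the paper attributes to its Lemma~1 even though that lemma covers only $A$-only propositions) is handled uniformly. The paper's piecewise construction, on the other hand, makes the role of the action schema in pairing up the new vertices slightly more explicit. One shared caveat: the edge-colour map $\ecol$ assigns a single index per object--proposition pair, so propositions with repeated arguments are not fully pinned down by the formal definition; your remark that ``the labels pin down each occurrence'' glosses over this, but the paper's proof has the same issue and it is a modelling detail rather than a gap in your argument.
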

The proof of Theorem~\ref{main} is in the supplementary material.
\stcomment{Since we have space, I wonder whether it would be possible to put a little proof sketch? That would be a better use of the 300 USD!}

\begin{algorithm}[!tp]
\caption{Action pruning algorithm}
\label{alg:algorithm}
\begin{algorithmic}[1] % Adding [1] here to enable line numbers
\renewcommand{\algorithmicrequire}{\textbf{Input:}}
\renewcommand{\algorithmicensure}{\textbf{Output:}}
\REQUIRE Planning problem with current state $s$
\REQUIRE Set $A_s$ of actions applicable in $s$
\ENSURE Pruned action set $A_p \subseteq A_s$

\STATE $K \gets \emptyset$, $A_p \gets \emptyset$
%\STATE Graph $G_s \gets \text{Colored\_\pog}(s)\ with\ Eq.\ 1$
\STATE Graph $G_s \gets \text{\pog}(s)\ with\ encoding\ in\ Eq.\ 1$
\STATE Orbits $O_{s} \gets \text{Nauty}(G_s)$

\FOR{$a$ \textbf{in} $A_s$}
    \STATE $K_{a} \gets \text{Replace\_params\_with\_orbits}(a, O_s)$
    \IF{$K_{a}$ \textbf{not in} $K$}
        \STATE $K \gets K \cup \{K_{a}\}$
        \STATE $A_p \gets A_p \cup \{a\}$
    \ENDIF
\ENDFOR
\RETURN $A_p$
\end{algorithmic}
\end{algorithm}
Unfortunately, identifying all isomorphic successor states in order to prune actions requires testing an exponential number of tuples against the automorphisms of $G_s$.
Therefore, we resort to a simpler
method that over-approximates the set of equivalent tuples and isomorphic successor states, and consequently does not preserve the completeness of the search process. 
It relaxes the conditions of the theorem by checking the equivalence of all individual pairs $A_i$ and $B_i$ in $s$, i.e., the condition that $\langle A_1,\dots,A_n\rangle \simeq \langle B_1,\dots,B_n \rangle$ is replaced with $\langle A_i \rangle \simeq \langle B_i \rangle ~ \forall i$, or in other terms that $A_i$ and $B_i$ are in the same orbit of $G_s$.
In our experiments, we find that this over-approximation yields good results in practice. Moreover, we did not observe any failure due to incompleteness, and therefore do not currently employ any fallback mechanism.

\stcomment{Yingbin, note that after discussion with Felipe, I removed ``colored'' to avoid confusion with the colored graph definitions in the background. I also amended the corresponding line in Alg. 1}

This process of action pruning is outlined in Algorithm \ref{alg:algorithm}.
First, the planning problem with current state $s$ is converted into a
%colored
\pog\ $G_s$. The Nauty library \cite{mckay:piperno:14} is then utilized to compute the orbits $O_s$ of $G_s$.
Since Nauty lacks support for feature vectors, we aggregate
%node
vertex features into an unique
%node
vertex color to detect automorphisms. This color-coding strategy is detailed in Equation \ref{eq:color}:
\vspace*{-2ex}
\begin{equation}
\label{eq:color}
\begin{split}
\text{color} & = \sum_{i=1}^N 10^{\beta_i}  \times F_i \quad \mbox{with}\\
    \beta_i & = \left\{ \begin{array}{cl}
                    \sum_{n=1}^{i - 1} \lceil \log_{10} M_n \rceil &  i \geq 2 \\
                    0 &  i = 1
                   \end{array} \right. ,
\end{split}
\end{equation}
where $N$ denotes the number of features, $F_i$ represents the value of feature $i$, and $M_n$ is the maximum possible value of feature $i$.

After obtaining the orbits $O_s$ of $G_s$, the parameters of each applicable action $a$ are substituted with their respective orbit IDs, generating a unique hash key $K_{a}$. This hash key is subsequently used to identify and eliminate symmetric actions, ensuring that only distinct actions are retained in $A_p$ for further processing.

\subsection{State Pruning}

Symmetries arise not only between child states but also across states from different parents. Many state pruning approaches have been proposed and proven useful in classical planning \cite{pochter:etal:11, domshlak:etal:12}. However, the main issue limiting their widespread use in planning problems is their high computational cost. To address this issue, we propose a novel method that performs state pruning with negligible additional overhead. Specifically, building on the permutation invariance property of GNNs, we use the embeddings from the second-to-last layer of the network as hash keys to efficiently detect and eliminate symmetries across states.

The idea of using neural network outputs to check similarity is not new, having been employed in Siamese networks since early work in deep learning \cite{bromley:etal:93}. These identical architecture, weight-shared networks are specifically designed to assess and compare the similarity between two inputs. This approach has proven effective across various fields, including fingerprint identification \cite{li:etal:21} and anomaly detection \cite{zhou:etal:21}. For GNNs, \citet{chen:etal:19} highlight the equivalence between graph isomorphism testing and approximating permutation-invariant functions. Moreover, standard GNNs have been shown to possess an expressive power, comparable to that of the 1-LWL test \cite{xu:etal:19}. While this implies GNNs may be unable to distinguish some non-isomorphic graphs, compromising the completeness of the search when state pruning is used, our results demonstrate that GNNs based on \pog\ can be highly effective in both heuristic prediction and state pruning.

The \pog\ $G_{s_i}$ for the current state $s_i$ is fed through a graph network $\phi_{\theta}$ to encode an embedding $z_i$.
Subsequently, $z_i$ is processed by a fully connected linear layer $\varphi_{\theta}$ to generate a heuristic value $\hat{h}_i \in \mathbb{R}$. This process is represented by
$z_{i} = \phi_{\theta}(G_{s_i})$ and
$\hat{h}_i = \varphi_{\theta}(z_{i})$.

Next, $z_i$ is rounded up and encoded using MD5 to shorten its length, serving as a key in a hash map for state matching. Since $z_i$ is efficiently captured during the network's forward pass, there is no need to generate keys through computationally expensive methods like calculating isomorphisms in PDG \cite{pochter:etal:11}, resulting in minimal additional cost for state pruning.

\section{Experiments}

\begin{table}[!tp]
\centering
% \fontsize{9}{11}\selectfont
\resizebox{\linewidth}{!}{%
\setlength{\tabcolsep}{2pt}
\begin{tabular}{lccccccccc}
\toprule
Domain       & $h^{FF}$  & LAMA    & GOOSE & OptRank & GPR & Distincter\\
\midrule
blocksworld  & 28        & 61        & 61$\pm$10   & 44$\pm$11& 69        & \textbf{88$\pm$4}\\
childsnack   & 26        & 34        & 16$\pm$4    & 32$\pm$1 & 20        & \textbf{64$\pm$5}\\
ferry        & 71        & 70        & 70$\pm$0    & 64$\pm$4 & 82        & \textbf{83$\pm$1}\\
floortile    &\textbf{10}&\textbf{10}& 1$\pm$0     & 1$\pm$0  & 2         & 2$\pm$0\\
miconic      &\textbf{90}&\textbf{90}& 89$\pm$1    & 88$\pm$4 &\textbf{90}& \textbf{90$\pm$0}\\
rovers       & 29        &\textbf{70}& 28$\pm$1    & 31$\pm$2 & 36        & 42$\pm$2\\
satellite    & 64        &\textbf{90}& 29$\pm$2    & 29$\pm$3 & 39        & 48$\pm$17 \\
sokoban      & 36        &\textbf{40}& 34$\pm$0    & 32$\pm$1 & 38        & 32$\pm$2 \\
spanner      & 30        & 30        & 39$\pm$16   & 65$\pm$0 & 74        & \textbf{90$\pm$0}\\
transport    & 41        &\textbf{68}& 37$\pm$4    & 42$\pm$5 & 28        & 50$\pm$3 \\
\midrule
Sum         & 425        & 563       & 405         & 429      & 478       & \textbf{589}\\
\bottomrule
\end{tabular}
}
\caption{Coverage comparison with SOTA methods on the 2023 International Planning Competition Learning Track.}
\label{tab:total_coverage}
\end{table}

\begin{table}[!t]
    \centering
    \begin{tabular}{l c c}
        \toprule
         Domain     & LAMA        & Distincter          \\ \midrule
        blocksworld & 390         & \textbf{198$\pm$10} \\
        childsnack  & 45          & \textbf{34$\pm$3}   \\
        ferry       & 257         & \textbf{206$\pm$2}  \\
        floortile   & 34          & \textbf{32$\pm$0}   \\
        miconic     & 324         & \textbf{273$\pm$8}  \\
        rovers      & \textbf{72} & 106$\pm$11          \\
        satellite   & \textbf{18} & 27$\pm$8           \\ 
        sokoban     & \textbf{46} & 49$\pm$14          \\ 
        spanner     & \textbf{14} & 16$\pm$0           \\ 
        transport   & 49          & \textbf{45$\pm$3}  \\ \midrule
        Sum         & 1249        & \textbf{987}        \\ \bottomrule
    \end{tabular}
    \caption{Average plan lengths over problems solved by both LAMA and Distincter.}
    \label{tab:lengths_table}
\end{table}

\begin{table}[!t]
\centering
% \fontsize{9}{11}\selectfont
\begin{tabular}{lcccc}
\toprule
Domain          & None      & Action   & State    & Distincter \\
\midrule
blocksworld     & 79$\pm$11 & 79$\pm$7 & 88$\pm$3 & 88$\pm$4 \\
childsnack      & 34$\pm$4  & 63$\pm$3 & 61$\pm$1 & 64$\pm$5 \\
ferry           & 82$\pm$0  & 82$\pm$0 & 83$\pm$1 & 83$\pm$1 \\
floortile       &  2$\pm$1  &  2$\pm$0 &  2$\pm$0 &  2$\pm$0 \\
miconic         & 90$\pm$0  & 90$\pm$0 & 90$\pm$0 & 90$\pm$0 \\
rovers          & 41$\pm$2  & 41$\pm$3 & 41$\pm$2 & 42$\pm$2 \\
satellite       & 45$\pm$13 & 46$\pm$13& 47$\pm$17& 48$\pm$17\\
sokoban         & 32$\pm$2  & 32$\pm$2 & 32$\pm$2 & 32$\pm$2 \\
spanner         & 83$\pm$0  & 90$\pm$0 & 83$\pm$0 & 90$\pm$0 \\
transport       & 42$\pm$1  & 42$\pm$1 & 49$\pm$2 & 50$\pm$3 \\
gripper         & 24$\pm$9  & 75$\pm$4 & 90$\pm$0 & 90$\pm$0 \\
grippers        & 62$\pm$5  & 89$\pm$1 & 85$\pm$1 & 90$\pm$1 \\
logistics       & 19$\pm$9  & 36$\pm$4 & 53$\pm$4 & 52$\pm$3 \\
movie           & 90$\pm$0  & 74$\pm$17& 90$\pm$0 & 75$\pm$17\\
tsp             & 76$\pm$24 & 78$\pm$0 & 90$\pm$0 & 90$\pm$0 \\
tyreworld       &  0$\pm$0  &  1$\pm$1 & 64$\pm$20& 65$\pm$21\\
\midrule
Sum             & 803       & 920      & 1048     & 1051     \\
\bottomrule
\end{tabular}
\caption{Ablation study. ``None'' refers to GBFS + GNN heuristic without pruning, ``Action'' denotes the use of action pruning, ``State'' represents the use of state pruning.
}
\label{tab:ablation}
\end{table}

\subsubsection{Datasets.} We evaluate our framework, Distincter, on the 2023 International Planning
Competition Learning Track \cite{seipp:segovia:23}, which includes ten domains. In ablation experiments, to assess the effectiveness of two proposed pruning methods, we further employ six domains that contain a lot of symmetrical states.

\subsubsection{Network structure.} Our graph network consists of RGCN layers with a hidden dimension of 64 \cite{schlichtkrull:etal:18}, followed by global add pooling, and a linear layer producing a one-dimensional output. The network is implemented using the standard PyTorch Geometric package \cite{fey:lenssen:19}. For further setting information, please see the supplementary material.

\subsubsection{Training and evaluation.} For each domain, we train a GNN using the RMSE loss function for 30 epochs, including 10 warm-up epochs. We use an initial learning rate of $10^{-3}$ and apply cosine annealing\cite{loshchilov:hutter:17} over a single cycle, with a momentum value of $0.9$. To adapt to varying numbers of examples ($N$) across different domains, we set the number of iterations to 100 per epoch and adjust the batch size accordingly, using $\frac{N}{100}$.

For evaluation, our planner is based on Fast Downward using eager-GBFS \cite{helmert:06} guided by GNN heuristic values. Upon completing the training, the model is saved in JIT format and executed using C++. To ensure result stability and mitigate dataset bias, we employ early stopping on a validation set to select optimal models \cite{bai:etal:23}. All experiments are conducted on a single CPU core with an NVIDIA A6000 GPU and 8GB of memory, with a 30 minute timeout per problem. The mean and standard deviation are computed from three trials.

\subsection{Results}

We compare Distincter with SOTA baselines, including both traditional heuristic search methods, namely LAMA \cite{richter:westphal:10} and GBFS with $h^{FF}$~\cite{hoffmann:nebel:01}, and GBFS with learnt heuristics using
GOOSE \cite{chen:etal:24:aaai}, its
optimal ranking counterpart
\cite{hao:etal:24}
and Gaussian Process Regression (GPR)
from WL features \cite{chen:etal:24:icaps}.
Note that, as shown in \cite{hao:etal:24}, other methods such
as STRIPS-HGN \cite{shen:etal:20} and Perfrank \cite{chrestien:etal:23}, are dominated by our baselines. All baselines are run on the same hardware and with the same computational requirements as Distincter.
In terms of coverage, Distincter matches or surpasses all baselines across five domains. Notably, when compared to the strongest learning baseline, GPR \cite{chen:etal:24:icaps}, Distincter achieves parity or superiority in eight of the ten domains. Additionally, the total coverage of Distincter surpasses that of model-based methods: it exceeds that of $h^{FF}$ by a substantial margin of 164 and 
%outperforms
that of LAMA by 26.
%a lead of 32. 
In Table~\ref{tab:lengths_table}, we report the average plan length over the problems successfully solved by both approaches. The results suggest a correlation between plan length and coverage.

In keeping with the 8GB memory requirement of the IPC learning track, we found it is insufficient for the Fast Downward translator to ground some large problems, resulting in performance loss as shown in Table \ref{tab:total_coverage}. For instance, in the ``ferry'' domain, when sufficient memory is available, Distincter can solve all 90 test problems.

Although Distincter exhibits 
very good performance across many domains, it struggles in others -- see e.g. its low performance on ``Floortile'' and ``Sokoban'', and the large deviations observed in the ``Satellite'' domain. The key issue with Floortile and Sokoban is that they require path-finding and geometrical reasoning which cannot be achieved with the limited receptive field of ordinary GNNs. Dead ends in these domains are another issue, as they cannot always be captured when training with optimal plans only. In Satellite, due to the the lack of static propositions in its graph, GOOSE learns a simple strategy that works in simple problems only. On the other hand, thanks to statics being included, Distincter is able to learn to solve much larger problems. However, standard GNNs are not expressive enough to learn to distinguish all non-isomorphic Satellite states \cite{drexler:etal:24:kr},
and therefore the learning procedure fails every once in a while, leading to a large variance.

\subsection{Ablation Study}

To assess the effectiveness of our proposed pruning techniques, we performed ablation experiments across four configurations. From Table \ref{tab:ablation}, we observe that in domains with a high degree of symmetries, such as ``spanner'' and ``childsnack'', action pruning offers substantial benefits. However, in the ``movie'' domain, which contains a large number of redundant objects, action pruning consumes excessive time computing symmetries. In contrast, state pruning improves performance in seven domains, demonstrating its broader utility. By combining both techniques, we can leverage their strengths to achieve better overall outcomes.

\section{Conclusion}

We introduced \pog, a novel graphical representation that captures key problem-solving features and is designed for combining efficiency with symmetry detection. Leveraging the properties of \pog, we proposed two efficient pruning techniques that are suitable for large-scale planning problems. Our framework, Distincter, achieved a historic milestone by outperforming the LAMA framework on the learning track of the 2023 International Planning Competition.

In addition, both pruning methods are applicable to traditional model-based approaches. Although state pruning with GNNs can be computationally expensive, small dedicated GNNs can mitigate this issue.

\section{Acknowledgments}
The authors thank Sandra Kiefer and Brendan McKay for useful discussions. This work was supported by the Australian Research Council grant DP220103815, by the Artificial and Natural Intelligence Toulouse Institute (ANITI) under the grant agreement ANR-23-IACL-0002, and by the European Union’s Horizon Europe Research and Innovation program under the grant agreement TUPLES No. 101070149.

\bibliography{aaai25}

\newpage
\appendix

\section{Proof of Theorem 1}

\newcommand{\predargk}[1]{\ensuremath{p(-,#1,-)}}
\newcommand{\tilg}{\ensuremath{g}}
\newcommand{\edge}[1]{\langle #1 \rangle}

\begin{lemma}
Let $\sigma$ be any automorphism function of $G_s$ (the TILG of state $s$) satisfying $\langle A_1,\dots,A_n \rangle \simeq \langle B_1,\dots,B_n \rangle$ where $A_i$ and $B_i$ are objects in $\mathcal{O}$ and, $A_i \neq A_j$ and $B_i \neq B_j$ for all $i\neq j$.
Then $\sigma(p(A_{i_1}, \dots, A_{i_k})) = p(B_{i_1}, \dots, B_{i_k})$ for all propositions $p(A_{i_1}, \dots, A_{i_k}) \in s$ where $i_1, \dots, i_k$ is a permutation of a subset of size $k$ \mbox{of $\{1,\dots, n\}$.}
\end{lemma}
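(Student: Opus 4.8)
The plan is to prove the lemma by tracking the image under $\sigma$ of a single proposition vertex and showing that its predicate and all of its argument positions are forced. Write $u_i$ (resp.\ $v_i$) for the object vertex of $A_i$ (resp.\ $B_i$) in $G_s$, so the hypothesis $\langle A_1,\dots,A_n\rangle \simeq \langle B_1,\dots,B_n\rangle$ gives $\sigma(u_i) = v_i$ for every $i \in \{1,\dots,n\}$. Fix a proposition $w = p(A_{i_1},\dots,A_{i_k}) \in s$, where $i_1,\dots,i_k$ are distinct elements of $\{1,\dots,n\}$, and regard $w$ as a proposition vertex of $G_s$. The goal is to show $\sigma(w) = p(B_{i_1},\dots,B_{i_k})$.

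First I would use vertex-colour preservation on $w$. Since $\sigma$ is an automorphism, $\ncol(\sigma(w)) = \ncol(w)$, and the $class$ component of a proposition vertex records the predicate it instantiates. Hence $\sigma(w)$ is a proposition vertex instantiating the same predicate $p$, and in particular has arity $k$. Next I would pin down the arguments via edge-colour preservation. By the definition of the \pog, for each $j \in \{1,\dots,k\}$ the proposition $w$ is joined to its $j$-th argument $A_{i_j}$ by the edge $(u_{i_j}, w)$ with $\ecol((u_{i_j},w)) = j$. Because the $A_{i_j}$ are pairwise distinct, these are $k$ distinct incident edges carrying the distinct colours $1,\dots,k$. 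Applying $\sigma$ sends each such edge to $(\sigma(u_{i_j}), \sigma(w)) = (v_{i_j}, \sigma(w))$ while preserving its colour $j$; thus the $j$-th argument of $\sigma(w)$ is the object $B_{i_j}$ for every $j$, and $\sigma$ being a bijection guarantees $\sigma(w)$ has no further incident edges to object vertices.

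Finally I would invoke the fact that a ground proposition is uniquely determined by its predicate together with its ordered tuple of arguments. The previous two steps show that $\sigma(w)$ instantiates $p$ and has $B_{i_j}$ in position $j$ for each $j$, with no other arguments; therefore $\sigma(w) = p(B_{i_1},\dots,B_{i_k})$, as claimed.

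The main obstacle is conceptual rather than computational: one must argue that the \pog\ encoding is \emph{faithful}, in the sense that the pair consisting of the predicate recorded by $\ncol$ and the position-labelled object neighbours recorded by $\ecol$ determines the proposition vertex unambiguously. This is exactly where the distinctness hypothesis $A_i \neq A_j$ is essential, since it ensures each argument position corresponds to a separate edge in the simple labelled graph $G_s$ rather than collapsing into a multi-edge. I would also remark that the full tuple equivalence is stronger than this lemma requires: all that is actually used is that $\sigma$ is an automorphism of $G_s$ with $\sigma(u_i) = v_i$ for all $i$.
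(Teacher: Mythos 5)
Your proposal is correct and follows essentially the same route as the paper's proof: vertex-colour preservation pins down the predicate of $\sigma(w)$, and edge-colour preservation of the position-labelled edges $(u_{i_j}, w)$ forces each argument to be $B_{i_j}$. Your version just spells out a few details the paper leaves implicit (arity, the role of the distinctness hypothesis, and that a ground proposition is determined by its predicate and ordered arguments).
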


\begin{proof}
By the definition of $\langle A_1,\dots,A_n \rangle \simeq \langle B_1,\dots,B_n \rangle$, we have that $\sigma(A_{i_j}) = B_{i_j}$.
Moreover, $\sigma$ preserves vertex and edge colors, therefore $\sigma(p(A_{i_1}, \dots, A_{i_k}))$ must be a proposition from predicate $p$ (vertex class color) and, for all $i_j \in \{1,\cdots,n\}$, its $i_j$-th argument must be $\sigma(A_{i_j}) = B_{i_j}$ because the edges $(A_{i_j}, p(A_{i_1}, \dots, A_{i_k}))$ and $(\sigma(A_{i_j})= B_{i_j}, \sigma(p(A_{i_1}, \dots, A_{i_k})))$ have the same color $i_j$.
\end{proof}

\begin{theorem}

\end{theorem}

\begin{proof}
Let $G_{s} = \langle V_s, E_s, f_s, l_s \rangle$ be the TILG for state $s$ and $V_s = \{o_1,\dots,o_n\} \cup \{p^s_1,\dots,p^s_m\} \cup \{g_1,\dots,g_k\}$ where $o_i$ are objects, $p^s_i$ are propositions true in $s$ and $g_i$ are goal propositions.
Also, let $a^+$ and $a^-$ ($b^+$ and $b^-$) represent the ordered list of add and delete effects of action $a$ ($b$).
Since $a$ and $b$ are groundings of $\alpha$, we have that $|a^+| = |b^+|$ and $|a^-| = |b^-|$ and that the $i$-th item in $a^+$ and $b^+$ ($a^-$ and $b^-$) correspond to the same predicates $p_i$ with different object instantiations.

Let $G_{s_a} = \langle V_a, E_a, \ncol_a, \ecol_a \rangle$ and $G_{s_b} = \langle V_b, E_b, \ncol_b, \ecol_b \rangle$ be the TILG associated with $s_a$ and $s_b$, respectively.
First, let's partition $V_a = V^\text{old}_a \cup V^\text{new}_a$ where $V^\text{old}_a$ are the vertices also in $V_s$ and $V^\text{new}_a$ are the new vertices in $V_a$ wrt $V_s$.
Clearly, $V^\text{old}_a = V_s \cap V_a$ and, applying the TILG definition and the STRIPS rule, we have $V^\text{old}_a = \mathcal{O} \cup \mathcal{G} \cup (s \cap ((s \setminus a^-) \cup a^+))$.
With more algebra manipulation, we arrive at $V^\text{old}_a = \mathcal{O} \cup \mathcal{G} \cup (s \setminus a^-) \cup (s \cap a^+)$, i.e., all objects, all goal propositions, all propositions not deleted and all added propositions that were already in $s$.
Doing the same for $V^\text{new}_a = V_a \setminus V_s$, we have $V^\text{new}_a = (\mathcal{O} \cup \mathcal{G} \cup s_a) \setminus (\mathcal{O} \cup \mathcal{G} \cup s) = s_a \setminus (\mathcal{O} \cup \mathcal{G} \cup s)$.
Since $s_a \cap \mathcal{O} = \emptyset$, after applying the STRIPS rule, we have $V^\text{new}_a = ((s \setminus a^-) \cup a^+) \setminus (\mathcal{G} \cup s) = a^+ \setminus (\mathcal{G} \cup s)$, i.e., all add effects that are not a goal proposition nor already in $s$.
Notice that all elements in $V^\text{new}_a$ are propositions of the form $p^a_i(A_{i1}, A_{i2}, \dots, A_{ik_i})$ and that the same can be done to create a partition of $V_b$.

In order to show that $G_{s_a}$ and $G_{s_b}$ are isomorphic, we need to show that there exists a function $\iso: V_a \to V_b$ that is a bijection and also preserves the vertex and edge colors.
We show that the following function $\iso$ satisfies all these requisites:
\begin{itemize}
\item $\iso(v) = \sigma(v)$ for $v \in V^\text{old}_a$
\item $\iso(p^a_i(A_{i1}, A_{i2}, \dots, A_{ik_i})) = p^b_i(B_{i1}, B_{i2}, \dots, B_{ik_i})$ for all $p^a_i(A_{i1}, A_{i2}, \dots, A_{ik_i}) \in V^\text{new}_a$ where $p^b_i(B_{i1}, B_{i2}, \dots, B_{ik_i})$ is the respective added proposition in $b^+$ based on the action schema grounding
\end{itemize}
where $\sigma$ is any automorphism function of $G_s$ satisfying $\langle A_1,\dots,A_n \rangle \simeq \langle B_1,\dots,B_n \rangle$.

\paragraph{$\iso$ is a bijection.}
For all $p^a_i(A_{i1}, A_{i2}, \dots, A_{ik_i}) \in V^\text{new}_a$, $\iso(p^a_i(A_{i1}, A_{i2}, \dots, A_{ik_i})) = p^b_i(B_{i1}, B_{i2}, \dots, B_{ik_i})$ is a bijection from $V^\text{new}_a$ to $V^\text{new}_b$ since it is a 1-to-1 mapping by construction and $p^b_i(B_{i1}, B_{i2}, \dots, B_{ik_i}) \not\in V^\text{old}_b$ otherwise Lemma 1 would imply that $p^a_i(A_{i1}, A_{i2}, \dots, A_{ik_i}) \in V^\text{old}_a$ which is impossible since it is in $V^\text{new}_a$ by assumption.

Since $\sigma$ is an automorphism, it is a bijection of $V_s$ to $V_s$ therefore, for all $v \in V^\text{old}_a$, $\iso(v) \in V_s$ and the same applies for $v \in V^\text{old}_b$.
Thus we need to show that $\sigma$ is also a bijection between $V^\text{old}_a$ and $V^\text{old}_b$.
Since $\sigma$ represents an automorphism for a TILG, it preserves the vertex and edge labels, therefore, $\sigma$ is a bijection between $X$ to $X$ where $X \subseteq V$ is the set of vertices with same colors in $G_s$.
Thus $\sigma$ is a bijection between all $v \in \mathcal{O}$ and between all $g \in \mathcal{G}$ since they are in all TILGs.

We still need to show the bijection between all the vertices representing propositions already in $s$ that remained true in $s_a$ and in $s_b$.
Namely, we need to show the bijection between $s \setminus (a^- \setminus a^+ )$ and $s \setminus (b^- \setminus b^+)$.
This follows from Lemma~1, therefore $\iso$ is a bijection.

\paragraph{Vertex color compatibility.}
We need to show that, for all $v \in V_a$, the color of $v$ in $G_{s_a}$ equals the color of $\iso(v)$ in $G_{s_b}$, i.e., $\ncol_a(v) = \ncol_b(\iso(v))$.
For all $v \in V^\text{new}_a$, $v$ represents a proposition by definition and let $p$ be the predicate that generate it.
The only possible color for $v$ is ``($p$, non-goal)'' since $G_{s_a}$ is a valid TILG, i.e., $v$ cannot be mentioned in the goal otherwise it would be in $V^\text{old}_a$.
The same argument applies to $V^\text{new}_b$.
As shown in the bijection part of the proof, $\iso$ is a bijection between $V^\text{new}_a$ and $V^\text{new}_b$, therefore we have $\ncol_a(v) = \ncol_b(\iso(v))$ for all $v \in V^\text{new}_a$.

We still need to show color compatibility for all $v \in V^\text{old}_a = \mathcal{O} \cup \mathcal{G} \cup (s \setminus a^-) \cup (s \cap a^+)$.
This is trivially true for $v \in \mathcal{O}$ since objects can only be labelled ``(obj, obj-type)'' and their object type is immutable.
For all $v \in (s \setminus a^-) \cup (s \cap a^+) \setminus \mathcal{G}$, i.e., non-goal propositions that are true in both $s$ and $s_a$, we have that $\ncol_a(v) = \ncol_b(\iso(v))$ because $\iso(v) = \sigma(v)$ and $\sigma$ is color compatible.

The remaining case is $v \in \mathcal{G}$.
Let $p$ be the predicate that generate the proposition $v$.
Since $G_{s_a}$ is a valid TILG, $\ncol_a(v)$ is either ``($p$, unachieved-goal)'' or  ``($p$, achieved-goal)''.
If $\ncol(v) = \ncol_a(v)$, i.e., the color of $v$ was not affect by $a$, then we have $\ncol(v) = \ncol(\sigma(v)) = \ncol_a(v) = \ncol_b(\tau(v))$ because $\sigma(v) = \tau(v)$ and $\sigma$ is color compatible.

Otherwise we have that $\ncol(v) \neq \ncol_a(v)$ and the color of $v$ was changed by $a$.
Since $a$ and $b$ are actions from the same action schema, i.e., $a = \alpha(A_1, A_2, \dots, A_n)$ and $b = \alpha(B_1, B_2, \dots, B_n)$ and $\langle A_1,\dots,A_n \rangle \simeq \langle B_1,\dots,B_n \rangle$, then $v \in a^+$ iff $\sigma(v) \in b^+$ and $v \in a^-$ iff $\sigma(v) \in b^-$ due to the action schema grounding.
Moreover, $\iso(v) = \sigma(v) \in \mathcal{G}$ and $\ncol(v) = \ncol(\sigma(v))$, i.e., both $v$ and $\iso(v)$ are in goal propositions with the same color in $G_s$ denoting if they are both achieved or both non-achieved.
Therefore, the change in the label of $v$ will be the same after applying $a$ in $s$ and applying $b$ in $s$ and $\ncol_a(v) = \ncol_b(\tau(v))$.

\paragraph{Edge color compatibility.}
TILGs are bipartite undirected graphs, thus all edges can be represented as an object-proposition pair with the color representing the position of the object in the instantiation of the predicate resulting in the given proposition.
Thus we need to show that, for all $(o, p) \in E_a$, $\ecol_a((o,p))$ and $\ecol_b((\iso(o),\iso(p)))$ are the same.
Notice that, $o \in \mathcal{O}$ is always in $V^\text{old}_a$ and in $V^\text{old}_b$.
If $p \in V^\text{old}_a$, then $\ecol_b((\iso(o),\iso(p))) = \ecol_b((\sigma(o),\sigma(p)))$ by the definition of $\iso$ and it equals $\ecol_a((o,p))$ since $\sigma$ is an automorphism.
Otherwise, $p \in V^\text{new}_a$ and, without loss of generality, let $p$ be $p^a_i(A_{i1}, A_{i2}, \dots, A_{ik_i})$ and $o$ be $A_{ij}$ for $j \in \{1,\dots,k_i\}$.
Since both $G_a$ and $G_b$ are valid TILGs, we have $\ecol_b((\iso(A_{ij}),\iso(p^a_i(A_{i1}, A_{i2}, \dots, A_{ik_i})))) = \ecol_b((B_{ij},p^b_i(B_{i1}, B_{i2}, \dots, B_{ik_i}))) = j$ which is equals $\ecol_a((A_{ij},p^a_i(A_{i1}, A_{i2}, \dots, A_{ik_i}))) = j$, i.e., $A_{ij}$ and $B_{ij}$ are the $j$-th argument of $p^a_i(A_{i1}, A_{i2}, \dots, A_{ik_i})$ and $p^b_i(B_{i1}, B_{i2}, \dots, B_{ik_i})$, respectively.
Therefore, $\iso$ is edge color compatible.
\end{proof}

\begin{figure*}[!tp]
    \centering
    \begin{subfigure}[b]{0.49\textwidth}
        \centering
        \includegraphics*[clip=true, width=\linewidth]{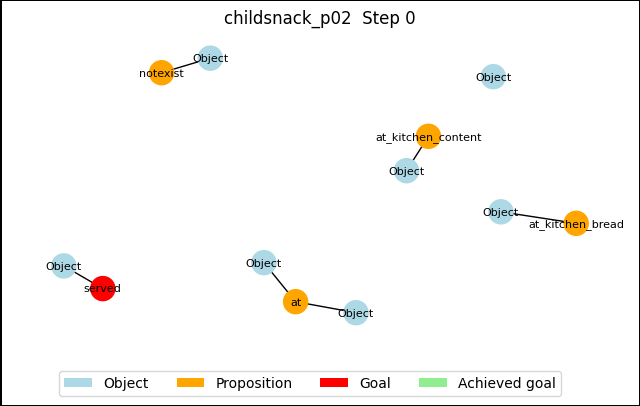}
        \caption{ILG}
    \end{subfigure}\hfill
    \begin{subfigure}[b]{0.49\textwidth}
        \centering
        \includegraphics*[clip=true, width=\linewidth]{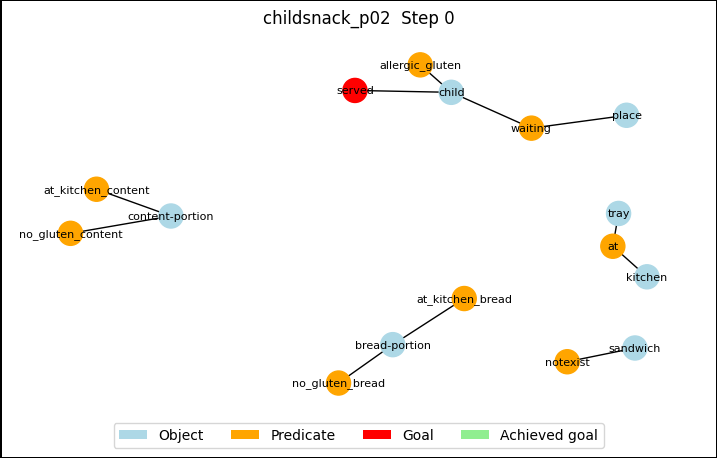}
        \caption{\pog}
    \end{subfigure}\hfill
    \caption{A planning problem in the childsnack domain is illustrated using both ILG and \pog. Note that the colors used are for visualization purposes only, differing from the color encoding employed for computing automorphisms. Observe ILG lacks types (besides ``Object'') and static propositions.}
    \label{fig:tilg}
\end{figure*}

\begin{table*}[!th]
\centering
\fontsize{9}{11}\selectfont
\begin{tabular}{lccccc}
\toprule
\multirow{2}{*}{Domain}  & \multicolumn{3}{c}{Training} & \multicolumn{2}{c}{Testing}  \\
                & Parameters      & Num. & Solved     & Parameters         & Num.  \\
\midrule
gripper         & $n \in [1, 30]$ & 30 & 16         & $n \in [50, 800]$  & 90     \\
grippers        & $n \in [1, 2],\ r \in [1, 3],\ o \in [4, 16]$ & 30 & 30     & $n \in [5, 50],\ r \in [3, 5],\ o \in [20, 400]$  & 90   \\
logistics       & $a \in [1, 3],\ c \in [2, 3],\ s \in [2, 3],\ p \in [2, 32]$   & 50 & 40     & $a \in [1, 5],\ c \in [2, 5],\ s \in [2, 3],\ p \in [5, 100]$  & 90   \\
movie           & $n \in [1, 30]$ & 30 & 30         & $n \in [50, 800]$  & 90     \\
tsp             & $n \in [1, 30]$ & 30 & 30         & $n \in [50, 1600]$ & 90     \\
tyreworld       & $n \in [1, 30]$ & 30 & 17         & $n \in [11, 100]$  & 90     \\
\bottomrule
\end{tabular}
\caption{The parameters are used to generate training and testing problems. The 'Solved' column indicates the number of problems solved by the Scorpion planner \cite{Seipp17Scorpion}.}
\label{tab:additional-domains}
\end{table*}

\section{Related Work}
\textbf{Symmetries in planning} Identifying symmetries during search is crucial for pruning search spaces, and numerous studies have investigated this area \cite{domshlak:etal:12, Shleyfman15, sievers:etal:19:icaps}. \citet{pochter:etal:11} introduce the Problem Description Graph (PDG), a structure that integrates all variables, their possible values, and action preconditions and effects
to compute automorphism generators at the initial state. These generators are key to creating canonical forms that aid in identifying symmetries during the search process. The paper also introduces a shallow pruning method, which eliminates branches without producing child states. While both methods share similarities with ours, our methods are notably more efficient. For state pruning, our method eliminates the need for automorphism calculations and canonical transformations, substantially reducing computational overhead. Action pruning focuses exclusively on the current state and
omits extraneous values and actions, significantly reducing graph size and consequently, the automorphism computation time. Furthermore, PDG requires fixing the number of objects \cite{Correa24ObjectCreation} to find all generators, while \pog \ is based on each state, so it can adjust this change.

\citet{sievers:etal:19:aaai} adopts graphs normally used for symmetry detection, such as the PDG or the Abstract Structure Graph (ASG) \cite{sievers:etal:19:icaps}, to learn planning portfolios.
However, these graphs are transformed into grayscale images for processing by a convolutional network. Moreover, the ASG, which encodes a lifted planning problem, creates computational issues for learning with GNNs, as its encoding of predicate and action schema arguments via a directed path of length linear in the number of arguments requires a large receptive field, and its directed edges restricts the information flow. Recent independent work by \cite{drexler:etal:24:icaps} proposes the Object Graph to represent states and detect all state symmetries by using graph isomorphism algorithms. While this approach effectively reduces symmetries in the training data, the time required to compute isomorphisms makes it impractical during the search process. In contrast, \pog \ provides a more compact structure than the Object Graph. By using edge types to represent parameter indices and consolidating each proposition into a single vertex, \pog \ is both clearer and contains fewer vertices.

\noindent \textbf{Graph neural networks in planning} The widespread success of graph neural networks in various fields has motivated an increasing number of researchers to explore neural networks for planning problems \cite{stahlberg:etal:22, horcik:sir:24, hao:etal:24, drexler:etal:24:kr}.

Chen et al. \cite{chen:etal:24:aaai, chen:etal:24:icaps} introduce two graph structures: the Lifted Learning Graph (LLG) and the Instance Learning Graph (ILG). LLG includes objects, propositions, goals, and action schemas to compute domain-independent heuristic values, while ILG simplifies LLG by excluding action schemas, which reduces graph size and increases processing speed in the domain-dependent setting. Our work builds upon these foundations by introducing \capsfullpog\ (\pog), which significantly enhances the graph-level representation with object types and static propositions. As illustrated in Figure \ref{fig:tilg}, a direct comparison between two graphs reveals that ILG has significantly fewer vertices and connections compared to \pog, resulting in a much smaller graph. The absence of object types and static propositions and their connections with objects in ILG may restrict the ability of neural networks to effectively learn from data.

\section{Additional Experimental Setting}
\subsection{Data Generation}
The 2023 International Planning Competition Learning Track \cite{seipp:segovia:23} is used in our experiments.
It consists of 10 domains, each of which contains 100 training problems and 90 testing problems. The testing problems are categorized into three levels of difficulty: easy, medium, and hard, with 30 problems in each category. To further evaluate the proposed pruning methods, we employ six additional domains that contain a high number of symmetrical states. The problems for these domains are generated using PDDL generators \cite{seipp-et-al-zenodo2022}. To
replicate the setting of the Learning Track, we generate training problems and testing problems for each additional domain (see Table~\ref{tab:additional-domains}).

For training data, we follow the setting of GOOSE \cite{chen:etal:24:aaai}, and solve the training problems using the Scorpion planner \cite{Seipp17Scorpion}, saving the graphs of the states on the optimal paths as data, and their optimal heuristic values as targets. One challenge we encountered with the Scorpion planner is the limited number of
training problems it is able to solve.

To address this, we developed a simple method creating additional training problems by considering increasingly larger subsets of the goal of an existing problem. Specifically, our method extracts the sequence in which subgoals were last achieved in optimal solutions of training problems. For existing problems with $n$ subgoals, we create $n-1$ new problems, where the goal for the $k$th new problem consists in the $k$ first subgoals of the sequence, while other elements remain unchanged.

These newly generated problems are then solved by the Scorpion planner as well. A key advantage of this method is that the new problems produced are simpler than the originals, enabling more problems to be solved. Furthermore, the method can be considered a form of offline data augmentation, as it doesn't rely on external sources of additional training data.

\subsection{Network Structure}
Our graph network is implemented using the standard PyTorch Geometric package \cite{fey:lenssen:19}. It comprises multiple RGCN layers with a hidden dimension of 64 \cite{schlichtkrull:etal:18}, followed by global add pooling and a linear layer that produces a one-dimensional output representing heuristic values. For most domains, we use three RGCN layers; however, the spanner, logistics and tyreworld domains require four layers, while the sokoban domain employs seven layers.

This layer adjustment addresses an observed issue: with fewer layers, the GNNs produce identical output vectors, leading to lower performance and substantial pruning errors in certain domains. This raises an important question: should the same number of layers be used across all domains? Due to substantial structural differences in \pog \ across domains, a limited number of layers often fails to effectively aggregate information for specific graph types. For example, in the logistics domain, information about the goal location is critical but can be distant from the package's current location in \pog, necessitating additional layers to effectively capture and aggregate this information. Therefore, the number of layers in GNNs for \pog \ should be tailored to each domain.

\subsection{Validation Details}
To ensure result stability and reduce the bias in the training datasets, we apply early stopping based on a validation set to select the optimal models \cite{bai2022rsa, bai2023sdn}. Specifically, since the training data only comes from optimal paths, a prediction that closely approximates the ground truth does not guarantee that it is the lowest in the heuristic queue. To address this issue, we save all sibling states on the optimal paths to select models. The selection metric is the validation accuracy, where the correct result corresponds to the optimal state with the lowest heuristic value among sibling states. Note: for validation, the generated sub-problems are not included.

\subsection{Reproducibility}

To ensure reproducibility, we have implemented several measures. First, we report both the mean and standard deviation of all experimental results. Second, to enable reliable replication, we conduct experiments in PyTorch's deterministic mode throughout model training, ensuring that results are consistent when using the same hardware and software versions. Finally, comprehensive details regarding the hardware, software, container image, and datasets used in this research will be made publicly available in our code repository, which will be released upon publication.

\end{document}